\newcommand{\myparagraph}[1]{\textbf{\emph{#1}}.}
\newenvironment{lenumerate}[2][]
{\begin{enumerate}[label=(#2\arabic*),leftmargin=0.2in,itemindent=0.15in,#1]}
{\end{enumerate}}
\setlist*[enumerate,1]{label={\itshape\arabic*)}}
\newcommand{\paragraphswithstop}{%
\let\copyparagraph\paragraph%
\renewcommand\paragraph[1]{\copyparagraph{##1.}}%
}
\def\namedlabel#1#2{\begingroup
  #2%
  \def\@currentlabel{#2}%
  \phantomsection\label{#1}\endgroup
}
\newsavebox{\boxifnotempty}
\newcommand{\displayifnotempty}[3]{\sbox\boxifnotempty{#2}\setbox0=\hbox{\usebox{\boxifnotempty}\unskip}%
\ifdim\wd0=0pt
\else
 #1\usebox{\boxifnotempty}#3%
\fi%
}
\newcommand{\ifempty}[2]{\setbox0=\hbox{#1\unskip}%
\ifdim\wd0=0pt%
 #2%
\fi%
}
\newcommand{\ifnotempty}[2]{\setbox0=\hbox{#1\unskip}%
\ifdim\wd0>0pt%
 #2%
\fi%
}
\newcommand*\newstoreddef[1]{
  \BeforeClosingMainAux{%
    \immediate\write\@auxout{%
      \string\restoredef{#1}{\csname #1\endcsname}%
    }%
  }%
}
\newcommand*{\restoredef}[2]{% used at the aux file
  \expandafter\gdef\csname stored@#1\endcsname{#2}%
}
\newcommand*{\storeddef}[1]{
  \@ifundefined{stored@#1}{0}{\csname stored@#1\endcsname}%
}
\newcommand{\real}[1]{\mathbb{R}^{#1}{}}
\newcommand{\bmat}[1]{\begin{bmatrix}#1\end{bmatrix}}
\newcommand{\transpose}{^\mathrm{T}}
\newcommand{\inverse}{^{-1}}
\DeclarePairedDelimiter{\norm}{\lVert}{\rVert}
\newcommand{\vct}[1]{\mathbf{#1}}
\DeclareMathOperator{\stack}{stack}
\providecommand{\cC}{\mathcal{C}}
\providecommand{\cE}{\mathcal{E}}
\providecommand{\cI}{\mathcal{I}}
\providecommand{\cL}{\mathcal{L}}
\providecommand{\cO}{\mathcal{O}}
\providecommand{\cT}{\mathcal{T}}
\providecommand{\cU}{\mathcal{U}}
\providecommand{\cV}{\mathcal{V}}
\providecommand{\cX}{\mathcal{X}}
\providecommand{\cY}{\mathcal{Y}}
  \newcommand{\newcolorlabel}[2]{%
  \expandafter\newcommand\csname #1\endcsname[1]{%
    \tikz[baseline]{\node[text=white,fill=#2,anchor=base,text height=1.3ex,text depth=0.1ex,font=\sffamily\bfseries]{##1}}}%
}
\newcommand{\newcommenter}[2]{%
  \expandafter\newcommand\csname #1\endcsname[1]{%
    \fcolorbox{#2}{#2}{\color{white}\textsf{\textbf{#1}}}
    {\color{#2}##1}}%
  %comment to mention commenter
  \expandafter\newcommand\csname at#1\endcsname{%
    \fcolorbox{#2}{#2}{\color{white}\textsf{\textbf{@#1}}}
    {\color{#2}}}%
  % comment to highlight
  \expandafter\newcommand\csname #1hl\endcsname[2]{%
    \colorbox{#2}{\color{white}\textsf{\textbf{#1}}}\sethlcolor{Azure2}\hl{##2}~%
    \expandafter\ifx\csname commentarrow\endcsname\relax$\leftarrow$\else \commentarrow[#2]\fi~%
    {\color{#2}##1}}%
  % comment to strikeout
  \expandafter\newcommand\csname #1st\endcsname[2]{%
    \colorbox{#2}{\color{white}\textsf{\textbf{#1}}}\sout{##2}~%
    \expandafter\ifx\csname commentarrow\endcsname\relax$\leftarrow$\else \commentarrow[#2]\fi~%
    {\color{#2}##1}}%
}
\tikzset{
  dim above/.style={to path={\pgfextra{
        \pgfinterruptpath
        \draw[>=latex,|->|] let
        \p1=($(\tikztostart)!1.5em!90:(\tikztotarget)$),
        \p2=($(\tikztotarget)!1.5em!-90:(\tikztostart)$)
        in(\p1) -- (\p2) node[pos=.5,sloped,above]{#1};
        \endpgfinterruptpath
      }
    }
  },
  dim double above/.style={to path={\pgfextra{
        \pgfinterruptpath
        \draw[>=latex,|->|] let
        \p1=($(\tikztostart)!3em!90:(\tikztotarget)$),
        \p2=($(\tikztotarget)!3em!-90:(\tikztostart)$)
        in(\p1) -- (\p2) node[pos=.5,sloped,above]{#1};
        \endpgfinterruptpath
      }
    }
  },
  dim below/.style={to path={\pgfextra{
        \pgfinterruptpath
        \draw[>=latex,|->|] let 
        \p1=($(\tikztostart)!-1em!-90:(\tikztotarget)$),
        \p2=($(\tikztotarget)!-1em!90:(\tikztostart)$)
        in (\p1) -- (\p2) node[pos=.5,sloped,below]{#1};
        \endpgfinterruptpath
      }
    }
  },
}
\tikzset{
    right angle quadrant/.code={
        \pgfmathsetmacro\quadranta{{1,1,-1,-1}[#1-1]}     % Arrays for selecting quadrant
        \pgfmathsetmacro\quadrantb{{1,-1,-1,1}[#1-1]}},
    right angle quadrant=1, % Make sure it is set, even if not called explicitly
    right angle length/.code={\def\rightanglelength{#1}},   % Length of symbol
    right angle length=2ex, % Make sure it is set...
    right angle symbol/.style n args={3}{
        insert path={
            let \p0 = ($(#1)!(#3)!(#2)$) in     % Intersection
                let \p1 = ($(\p0)!\quadranta*\rightanglelength!(#3)$), % Point on base line
                \p2 = ($(\p0)!\quadrantb*\rightanglelength!(#2)$) in % Point on perpendicular line
                let \p3 = ($(\p1)+(\p2)-(\p0)$) in  % Corner point of symbol
            (\p1) -- (\p3) -- (\p2)
        }
    }
}
\newcommand{\pgfextractangle}[3]{%
    \pgfmathanglebetweenpoints{\pgfpointanchor{#2}{center}}
                              {\pgfpointanchor{#3}{center}}
    \global\let#1\pgfmathresult  
}
\newcommand{\commentarrow}[1][Azure4]{\tikz[baseline=-3pt]{\node[shape border uses incircle, fill=#1,rotate=180,single arrow, inner sep=1pt, minimum size=6pt, single arrow head extend=2pt]{};}}
\tikzset{ax/.style={-latex,line width=2pt}}
\tikzset{camera/.style={fill=Sienna1,fill opacity=0.5},%
image plane/.style={draw=RoyalBlue3,line width=2pt}}
\newcommand{\prm}{{\texttt{PRM}}}
\newcommand{\rrt}{{\texttt{RRT$^*$}}}
\newcommand{\rrtstar}{{\texttt{RRT$^*$}}}
\newsavebox{\bigleftbox}
\title{\LARGE \bf
  Sample-Based Output-Feedback Navigation with Bearing Measurements
}
\author{Mahroo Bahreinian$^{1}$, Marc Mitjans$^{2}$, Roy Xing$^{3}$ and Roberto Tron$^{4}$% <-this % stops a space
  \thanks{This work was supported by ONR MURI N00014-19-1-2571 ``Neuro-Autonomy: Neuroscience-Inspired Perception, Navigation, and Spatial Awareness''}% <-this % stops a space
  \thanks{$^{1}$Mahroo Bahreinian is with Division of Systems Engineering at Boston University, Boston, MA, 02215 USA. Email:
    {\tt\small mahroobh@bu.edu}}%
  \thanks{$^{2}$Marc Mitjans is with Department of Mechanical Engineering at Boston University, Boston, MA, 02215 USA. M. Mitjans is additionally supported by ”la Caixa” Foundation fellowship LCF/BQ/AA18/11680117.
    {\tt\small mmitjans@bu.edu}}%
    \thanks{$^{3}$ Roy Xing is with Department of Mechanical Engineering at Boston University, Boston, MA, 02215 USA.
    {\tt\small royx@bu.edu}}%
  \thanks{$^{4}$Roberto Tron is with Department of Mechanical and System Engineering.{\tt\small tron@bu.edu}}%
}
\begin{document}

\maketitle
\thispagestyle{empty}
\pagestyle{empty}

%%%%%%%%%%%%%%%%%%%%%%%%%%%%%%%%%%%%%%%%%%%%%%%%%%%%%%%%%%%%%%%%%%%%%%%%%%%%%%%%
\begin{abstract}
  We consider the problem of sample-based feedback-based motion planning from \emph{bearing} (direction-only) measurements. We build on our previous work that defines a cell decomposition of the environment using \rrtstar{}, and finds an output feedback controller to navigate through each cell toward a goal location using duality, Control Lyapunov and Barrier Functions (CLF, CBF), and Linear Programming. In this paper, we propose a novel strategy that uses relative bearing measurements with respect to a set of landmarks in the environment, as opposed to full relative displacements. The main advantage is then that the measurements can be obtained using a simple monocular camera.
We test the proposed algorithm in the simulation, and then in an experimental environment to evaluate the performance of our approach with respect to practical issues such as mismatches in the dynamical model of the robot, and measurements acquired with a camera with a limited~field~of~view.
\end{abstract}

%%%%%%%%%%%%%%%%%%%%%%%%%%%%%%%%%%%%%%%%%%%%%%%%%%%%%%%%%%%%%%%%%%%%%%%%%%%%%%%%
\section{INTRODUCTION}
Motion planning is a major research area in the context of mobile robots, as it deals with the problem of finding a path from an initial state toward a goal state while avoiding collisions \cite{laumond1998robot,latombe2012robot,choset2005principles}. Traditional path planning methods focus on finding \emph{single nominal paths} in a given \emph{known map}, and the majority of them makes the implicit assumption that the agent possesses a lower-level \emph{state feedback} controller for following such nominal path in the face of external disturbances and imperfect models. In this paper, we instead use the alternative approach of synthesizing a set of output-feedback controllers instead of single a nominal path. Focusing on controllers allows us to directly consider the measurements (outputs) available to the agent, instead of assuming full state knowledge (i.e., a perfect localization in the environment); the main advantage of this approach is that it is intrinsically robust to disturbances and limited changes in the environment. The main challenge, however, is that we need to explicitly take into account the type of measurements that are available. For instance, in this paper we consider the case of a mobile robot that can use a monocular camera to detect a set of landmarks (e.g., objects) in the environment. %that their positions are fixed and known.
Using image-based information, the robot can compute the bearing (i.e., the direction vector) with respect to the landmarks. 
The goal of this work is then to synthesize controllers that can use this type of measurements for feedback-based path planning. %In this work, we are interested in finding a set of out-put feedback controllers that guides the robot from the start location to the goal location using only the extracted bearing information while satisfying the safety and stability constraints.
%%%%%%%%%%%%%%%%%%%%%%%%%%%%%%%%%%%%%%%%%%%%%%%%%%%%%%%%%%%%%%%%%%%%%%%%%%%%%%%%

\myparagraph{Previous works}
A well known class of techniques for motion planning is sampling-based methods. Algorithms such as Probabilistic RoadMaps (\prm) \cite{kavraki1996probabilistic}, Rapidly exploring Random Trees (\rrt, \cite{rrt,lavalle2001randomized}) and asymptotically optimal Rapidly Exploring Random Tree (\rrtstar{}, \cite{karaman2011sampling}), have become popular in the last few years due to their good practical performance, and their probabilistic completeness \cite{lavalle2006planning,lavalle2001randomized,karaman2011sampling}; there have also been extensions considering perception uncertainty \cite{renganathan2020towards}. However, these algorithms only provide nominal paths, and assume that a separate low-level controller exists to generate collision-free trajectories at run time.

%%%%%%%%%%%%%%%%%%%%%%%%%%%%%%%%%%%%%%%%%%%%%%%%%%%%%%%%%%%%%%%%%%%%%%%%%%%%%%%%%%%%%%%%%%%%%%%%%
%\rtron{Talk about \cite{Mahroo,Mahroo2} at the same time (not first one, then the other).}
The concept of the sample-based output feedback controller is based on \cite{Mahroo,Mahroo2} which, the output feedback controller takes as input relative displacement measurements with respect to a set of landmarks. This algorithm is based on solving a sequence of robust min-max Linear Programming (LP) problems on the elements of a convex cell decomposition of the environment and using linear Control Lyapunov Function (CLF) and Control Barrier Function (CBF) constraints, to provide stability and safety guarantees, respectively. Additionally, it extends planning beyond the discrete nominal paths, as feedback controllers can correct deviations from such paths, and are robust to discrepancies between the planning and real environment maps. The authors in \cite{Mahroo} assumes that a polyhedral convex cell decomposition of the environment is available, which greatly reduces its applicability. The authors in \cite{Mahroo2} extended upon \cite{Mahroo} and introduces a representation of the environment that is obtained via a modified version of the optimal Rapidly-exploring Random Trees (\rrtstar{}), combined with a convex cell decomposition of the environment.

%%%%%%%%%%%%%%%%%%%%%%%%%%%%%%%%%%%%%%%%%%%%%%%%%%%%%%%%%%%%%%%%%%%%%%%%%%%%%%%%

In order to observe the environment, modern robots can be equipped with different kinds of sensors, such as Inertial Measurement Units \cite{zhao2011motion}, visual sensors \cite{papanikolopoulos1993visual}, sonar sensors\cite{yata1999fast}, and contact sensors~\cite{lumelsky1990unified}, etc. Among these, monocular cameras are nearly ubiquitous, given their favorable trade-offs with respect to size, power consumption, and the richness of information in the measurements; a peculiarity of this sensor is that, due to perspective projection, it can provide relatively precise \emph{bearing} information (the direction of an object or point in the environment) but not, from a single image alone, \emph{depth} information. Many different techniques have been introduced for monocular navigation and control.
An example is visual servoing, where image-based computations are used to find a control law that guides the robot toward a \textit{home} location using bearing alone \cite{lambrinos1998landmark,argyros2001robot,liu2010bearing,tron2014optimization}, or by concurrently estimating depths \cite{pentland1989simple,ens1993investigation,michels2005high}.

\myparagraph{Proposed approach and contributions}
The goal of the present work is to joint the two threads of work mentioned above: apply the synthesis method of \cite{Mahroo,Mahroo2} but with controllers that use as input a set of bearing measurements between the robot and a set of landmarks.
% In this work, we focus on designing a set of the output-feedback controller for a set of convex cells of the decomposition of the environment which takes the relative bearing measurements with respect to a set of landmarks as an input.
Since we consider a planning problem, we assume that a map of the environment is available, and that the positions of the landmarks are known and fixed in the reference frame of this map; furthermore, we assume that a global compass direction is available (so that the orientation, but not the position of the robot in the map is known).
The main contribution of this work is then to provide a way to synthesize bearing-based feedback controllers that can be used to solve the path planning problem by extending the method of~\cite{Mahroo,Mahroo2} via a dynamic rescaling of the measurements. This contribution is tested in both simulations and experiments with real robots.
% \begin{itemize}
% \item Propose an algorithm that takes a set of relative bearing measurements between the robot and a set of landmarks and then modify those bearing measurements such that the new bearing measurements form the uniform scaling of the set of landmarks.
% \item Introduce a new method to compute the output-feedback controller to navigate with relative bearing measurements and allow the method of \cite{Mahroo2} to be applied when the relative distance measurements are not available.
% \item Implement the proposed algorithm in a real environment to validate its performance.
% \end{itemize}

%%%%%%%%%%%%%%%%%%%%%%%%%%%%%%%%%%%%%%%%%%%%%%%%%%%%%%%%%%%%%%%%%%%%%%%%%%%%%%%%
\section{BACKGROUND}\label{sec:background}
In this section, we review the system dynamics, bearing direction measurements, CLF and CBF constraints, and the \rrtstar{} method in the context of our proposed work.
\subsection{System Dynamics}
Consider a driftless control affine system
\begin{equation}\label{sys1}
  \dot{x}= Bu,
\end{equation}
where $x \in \cX\subset\real{n}$ denotes the state, $u\in\cU\subset\real{m}{}$ is the system input, and $B\in\real{n\times m}$ define the linear dynamics of the system. 
Note that 
We assume that the system \eqref{sys1} is controllable, that $\cX$ and $\cU$ are polytopic,
\begin{align}\label{state_limits}
  \cX=\{x\mid A_{x}x\leq b_{x}\},&& \cU=\{u\mid A_{u}u\leq b_u\},
\end{align}
and that $0\in\cU$.
The assumptions above are more restrictive than those in, e.g., \cite{Mahroo2}, but are necessary to allow our solution to the bearing-based control problem.

Note that, in our case, $\cX$ will be a convex cell centered around a sample in the tree (Sec.~\ref{sec:simplified-tree}).
%%%%%%%%%%%%%%%%%%%%%%%%%%%%%%%%%%%%%%%%%%%%%%%%%%%%%%%%%%%%%%%%%%%%%%%%%%%%%%%%
\subsection{Bearing direction measurements}\label{sec:bearing}
In this work, we assume global compass direction is available (the bearing directions can
be compared in the same frame of reference) and the robot has only access to bearing direction measurements (see Fig.~\ref{bearing}). The bearing direction measurements are defined as
\begin{equation}\label{bearing_mes}
    \beta_i= d_i(x)^{-1}(l_i-x),\;\;\;\; i=\{1,\hdots,N\},
\end{equation}
where $N$ is the number of \emph{landmarks} and $d_i$ is the relative displacement measurement between the position of the robot and landmark $l_i$
\begin{equation}\label{distance}
    d_i(x) = \norm{l_i-x}.
\end{equation}
The location of landmarks ${l}_i$ is assumed to be fixed and known at the planning time. Additionally, we assume at the implementation time, the robot can measure the bearing direction $\beta_i$ between its position $x$ and a set of landmarks $l_i$. 

\begin{figure*}
\subfloat[]{\label{bearing}{\includegraphics[width=5.5cm]{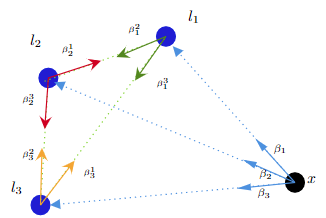}}}
\hfill
 \subfloat[]{\label{circle}{\includegraphics[width=5.5cm]{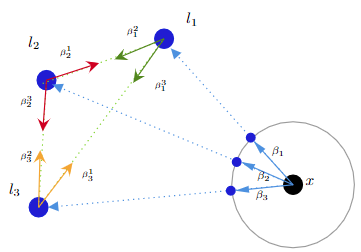}}}
 \hfill
  \subfloat[]{\label{config}{\includegraphics[width=4.5cm]{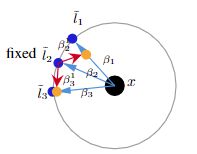}}}
  \caption{a) Bearing direction measurements. Landmarks are shown by blue circles and the robot is shown by a black circle. We assume the robot measures the bearing direct to landmarks and they are shown by blue arrows. The position of landmarks is known and bearing direction measurements between landmarks are shown by purple, green, and orange arrows. b) As bearing measurements are unit vectors, the robot sees all the landmarks within 1 unit from itself. c) modify bearing measurements by assuming the landmark $l_2$ is fixed.}
\end{figure*}

%%%%%%%%%%%%%%%%%%%%%%%%%%%%%%%%%%%%%%%%%%%%%%%%%%%%%%%%%%%%%%%%%%%%%%%%%%%%%%%%
%\section{CELL DECOMPOSITION via SAMPLING}
%In this section, we review the \rrtstar{} algorithm and then explain our method to generate a simplified tree by implementing set of functions to reduced the number of nodes and edges of the tree that is generated by \rrtstar{} algorithm. Then, we propose out method to divide the configuration space into cells according to a tree-graph representation of the environment.
%%%%%%%%%%%%%%%%%%%%%%%%%%%%%%%%%%%%%%%%%%%%%%%%%%%%%%%%%%%%%%%%%%%%%%%%%%%%%%%%%%%%%%%%%%%%%%%%
\subsection{Optimal Rapidly-Exploring Random Tree (\rrtstar{})}
\label{sec:rrtstar}
The \rrtstar{} sampling-based algorithm is typically used for single-query path planning, but that can also be used (as in this paper) to build a tree $\cT=(\cV,\cE)$ representing the free configuration space starting from a given root node. In this work, we build a tree in the free configuration space by implementing the \rrtstar{} algorithm, and the only modification we made to the original \rrtstar{} algorithm is that we store random samples that were found to be in collision with an obstacle in the list $\cV_{\text{collision}}$ instead of discarding them; this list is then returned and used to define the CBF constraints in our algorithm (see Sec.~\ref{sec:tree-CBF}).
\rrtstar{} is guaranteed to be asymptotically complete and optimal, although these guarantees do not hold with a finite number of samples.

%%%%%%%%%%%%%%%%%%%%%%%%%%%%%%%%%%%%%%%%%%%%%%%%%%%%%%%%%%%%%%%%%%%%%%%%%%%%%%%%%%%%%%%%%%%%%%%%%%%
\subsection{Convex Decomposition of the Environment by Simplified \rrtstar{} Trees}\label{sec:simplified-tree}
%\rtron{Combine sections E,F,G into a single section}
We start with a tree $\cT = (\cV,\cE)$ generated by the traditional \rrtstar{} algorithm from Sec. \ref{sec:rrtstar}. Since the number of samples is finite, the generated tree is not optimal although it has a large number of nodes. We simplify the tree to reduce the number of nodes (while keeping all the samples that are in collision with obstacles) by following the simplified-\rrtstar{} algorithm in \cite{Mahroo2} and denote it as $\cT=(\cV_s,\cE_s)$.

Note that as a consequence of the simplifying steps above, it is still possible to connect any sample that was discarded from the original \rrtstar{} to the simplified tree with a straight line, suggests that the simplified tree will be a good road-map representation \cite{Choset:book05} of the free configuration space reachable from the root (up to the effective resolution given by the original sampling).
%%%%%%%%%%%%%%%%%%%%%%%%%%%%%%%%%%%%%%%%%%%%%%%%%%%%%%%%%%%%%%%%%%%%%%%%%%%%%%%%%%%%%%%%
%\subsection{Convex Cell}\label{tree-X}

Given the simplified tree $\cT_s=(\cV_s,\cE_s)$, for each node $i\in \cV_s$ in the tree, we define a convex cell $\cX_{ij}$ similar to \cite{Mahroo2} such that the boundaries of $\cX_{ij}$ are defined as the bisectors hyper-plane between node $i$ and other nodes in the tree except node $j$ which node $j$ is the parent of node $i$.
The polyhedron $\cX_{ij}$ is similar to a Voronoi region \cite{latombe2012robot}(See Fig.~\ref{fig:CBF-X} for an example). Note that $\cX_{ij}$ contains all the points that are closest to $i$ than other vertices in $\cT_s$, but it also includes the parent~$j$.
%$\cX_{ij}$
\begin{figure}[t]
  \centering
  \includegraphics[width=5.5cm]{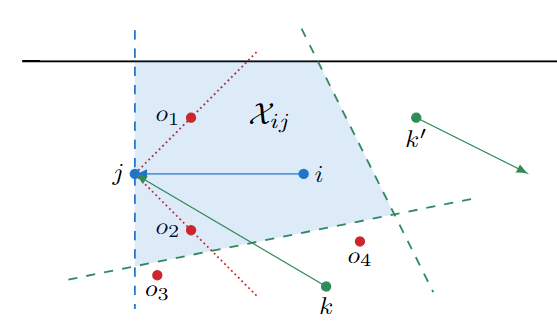}
 
  \caption{The Voronoi-like region $\cX_{ij}$ for edge $(i,j)$, and the corresponding CBF constraints; green points and arrows: other vertices and edges in the tree $\cT_s$; red points: collision samples in $\cV_{\text{collision}}$ that are inside $\cX_{ij}$; black line: boundaries of the configuration space; dashed lines: boundaries of the convex set $\cX_{ij}$; red dotted lines: CBF constraints $h_i$. }\label{fig:CBF-X}
\end{figure}
%%%%%%%%%%%%%%%%%%%%%%%%%%%%%%%%%%%%%%%%%%%%%%%%%%%%%%%%%%%%%%%%%%%%%%%%%%%%%%%%
%\subsection{Representation of the Environment}\label{sec:environment}
The environment is implicitly represented by a simplified tree graph $\cT_s=(\cV_s,\cE_s)$ via sampling. According to the simplified tree, the environment is decomposed the environment into a set of convex cells.

%\rtron{Move this part to section A (after the definition of bearings)}

Note that the landmarks $\hat{l}_i$, from the point of view of our algorithms, can be arbitrary as long as there is at least two landmark visible from any point $x$ in the free configuration space. The landmarks do not need to be chosen from the samples of the \rrtstar{} algorithm, or from the obstacles.

%%%%%%%%%%%%%%%%%%%%%%%%%%%%%%%%%%%%%%%%%%%%%%%%%%%%%%%%%%%%%%%%%%%%%%%%%%%%%%%%%%

%\rtron{You do not need sections \ref{sec:tree-CLF} and \ref{sec:tree-CBF} below. Just give a few sentences to give the intuitive meaning of $z_{ij}, A_{h_{ij}}, b_{h_{ij}}$ that appear in the CLF and CBF, then move both them together in a subsection of Section~\ref{sec:background}}.
\subsection{Control Lyapunov and Barrier Functions (CLF, CBF)}\label{sec:ECBF}
In this section, we review the CLF and CBF constraints, which are differential inequalities that ensure stability and safety (set invariance) of a control signal $u$ with respect to the dynamics \eqref{sys1}. First, it is necessary to review the following.

\begin{definition}
Given a sufficiently smooth function $h:\real{n}\to\real{}$ and a vector field $f:\real{n}\to\real{n}$, we use the notation $L_fh=\nabla_x h\transpose f$ to denote the Lie derivative of $h$ along $f$, where $\nabla h$ represents the gradient field of $h$.
\end{definition}
The Lie derivative of a differentiable function $h(x)$ for the dynamics \eqref{sys1} with respect to the vector field $B$ is defined as $\cL_{B}h(x)$. Applying this definition to \eqref{sys1} we obtain
\begin{equation}\label{Lie}
  \cL_Bh(x) = \frac{\partial h(x)}{\partial x} Bu.
\end{equation}

In this work, we assume that Lie derivatives of $h(x)$ of the first order are sufficient \cite{Isidori:book95} (i.e., $h(x)$ has relative degree $1$ with respect to the dynamics \eqref{sys1}); however, the result could be extended to the higher degrees, as discussed in \cite{Mahroo}.

We now pass to the definition of the differential constraints.
Consider a continuously differentiable function $V(x):\cX\to\real{}$, $V(x)\geq 0$ for all $x\in\cX$, with $V(x)=0$ for some $x\in\cX$.
\begin{definition}\label{def:ECLF}
  The function $V(x)$ is a \textit{Control Lyapunov Function} (CLF) with respect to \eqref{sys1} if there exists positive constants $c_1,c_2,c_v$ and control inputs $u\in \cU$ such that
  \begin{equation}\label{cons:clf}
    \begin{aligned}
      \cL_BV(x)u+c_v V(x)\leq 0,\forall x \in \cX.
    \end{aligned}
  \end{equation}
  Furthermore, \eqref{cons:clf} implies that $\lim_{t\to\infty}V(x(t))=0$.
\end{definition}
Consider a continuously differentiable function $h(x):\cX\to\real{}$ which defines a safe set $\cC$ such that
\begin{equation}\label{set_c}
  \begin{aligned}
    \cC&=\{x\in \real{n}|\;h(x)\geq0\},\\
    \partial \cC&=\{x\in \real{n}|\;h(x)=0\},\\
    {Int}(\cC)&=\{x\in \real{n}|\;h(x)>0\}.
  \end{aligned}
\end{equation}
In our setting, the set $\cC$ will represent a convex local approximation of the free configuration space (in the sense that $x\in\cC$ does not contain any sample that was found to be in collision).
We say that the set $\cC$ is \emph{forward invariant} (also said \emph{positive invariant} \cite{positiveInvariant}) if  $x(t_0) \in \cC$ implies $x(t)\in \cC$, for all $t\geq 0$~\cite{zcbf1}.
\begin{definition}[CBF, \cite{nguyen2016exponential}]\label{def:ECBF}
  The function $h(x)$ is a \emph{Control Barrier Function} with respect to \eqref{sys1} if there exists a positive constant $c_h$, control inputs $u\in \cU$, and a set $\cC$ such that
  \begin{equation}\label{cons:cbf}
   \cL_Bh(x)u+c_hh(x)\geq 0,\forall x \in \cC.
  \end{equation}
  Furthermore, \eqref{cons:cbf} implies that the set $\cC$ is forward invariant.
\end{definition}
%%%%%%%%%%%%%%%%%%%%%%%%%%%%%%%%%%%%%%%%%%%%%%%%%%%%%%%%%%%%%%%%%%%%%%%%%%%%%%%%%%%%%%%%%%%%%%%%
\subsection{Stability by CLF}\label{sec:tree-CLF}
To stabilize the navigation along an edges of a tree, we define the Lyapunov function $V_{ij}(x)$ as
\begin{equation}\label{V}
  V_{ij}(x)=z_{ij}^T (x-x_{j}),
\end{equation}
%\rtron{the concept of exit face is not explained in a clear way. You could remove it, and simply talk about exit direction.}
where $z_{ij}= \frac{x_j-x_i}{\norm{x_j-x_i}}$ is the unit vector of \text{\emph{exit direction}} for edge $(i,j)$, $x_{j} \in \text{\emph{exit face}}$ is the position of the parent of node $i$, and $V_{ij}(x)$ reaches its minimum $V(x)=0$ at $x_j$. Note that the Lyapunov function represents, up to a constant, the distance $d(x,x_j)$ between the current system position and the exit face. By Definition~\ref{def:ECLF}, $V_{ij}(x)$ is a CLF.

%%%%%%%%%%%%%%%%%%%%%%%%%%%%%%%%%%%%%%%%%%%%%%%%%%%%%%%%%%%%%%%%%%%%%%%%%%%%%%%%%%

\subsection{Safety by CBF}\label{sec:tree-CBF}
% \mmitjans{Section \emph{high-level} is commented out above}
% For every edge $(i,j)\in\cE$, we aim to design a controller to drive the robot from any point in $\cX_{ij}$ toward $j$.

In this section, we define barrier functions $h_{ij}(x)$ that define a cone representing a local convex approximation of the free space between $i$ and $j$, in the sense that it excludes all samples in $\cV$ that are on the way from $i$ to $j$. In particular, we use the following steps:
\begin{enumerate}
\item Define set $\cO_{ij}\subset\cV_{\text{collision}}$ whose are in $\cX_{ij}$.
\item From the set $\cO_{ij}$, we choose two samples $\{o_{u},o_{d}\}$ on two sides of the edge $(i,j)$ such that they are closet samples in $\cO_{ij}$  to edge $(i,j)$.
\item We write the equations of two lines passing through $\{j,o_{u}\}$ and $\{j,o_{d}\}$ in a matrix form using $A_{h_{i}}\in \real{2\times 2}$, $b_h\in \real{2}$ to define the invariant set
  \begin{equation}
    \cC_{ij}=\{x\in\real{2}:A_{h_{ij}}\vct{x}+b_{h_{i}}>0\}.
  \end{equation}
\end{enumerate}
The corresponding CBF is then defined as
\begin{equation}\label{h}
  h_{ij}(x)=A_{h_{ij}}x+b_{h_{ij}}.
\end{equation}
An example of the set $\cC_{ij}$  is shown in Fig.~\ref{fig:CBF-X}. Note that the region $\cC_{ij}$ might not include the entire cell $\cX_{ij}$. However, the controller will be designed to satisfy the CBF and CLF constraints over the entire cell $\cX_{ij}$ \cite{Mahroo2}).

%%%%%%%%%%%%%%%%%%%%%%%%%%%%%%%%%%%%%%%%%%%%%%%%%%%%%%%%%%%%%%%%%%%%%%%%%%%%%%%%%%
\subsection{Output Feedback Controller}\label{sec:tree-controller}
%\rtron{Combine Sections \ref{sec:tree-controller} and \ref{problem_opt}, and put as a new subsection of \ref{sec:background}}
\newcommand{\xpos}{x_{\textrm{pos}}}
We assume that the robot can only measure the relative displacements between the robot's position $x$ and the landmarks in the environment, which corresponds to the output function%\rtron{Make sure the notation is consistent with the intro in Section \ref{sec:environment}}
\begin{equation}\label{vec-landmarks}
  \cY=(L-x\vct{1}\transpose)^\vee=L^\vee-\cI x=\stack{(l_i- x)},
\end{equation}
where $L\in\real{n\times n_l}$ is a matrix of landmark locations, $i=1,\hdots,n_l$ that $n_l$ is the number of landmarks,  $A^\vee$ represents the vectorized version of a matrix $A$, $\cI=\vct{1}_{nl} \otimes I_n$, and $\otimes$ is the Kronecker product. We define the feedback controller as
\begin{equation}\label{u}
  u_{ij}=K_{ij}\cY,
\end{equation}
where $K_{ij} \in \real{m\times nn_l}$ are the feedback gains that need to be found for each cell $\cX_{ij}$.

In \eqref{u}, the input to the controller is the relative displacement measurements between the position of the robot and the set of landmarks in the environment. The controller in \eqref{u} is a weighted linear combination of the measured displacements $y$. The goal is to design $u$ such that the system is driven toward the exit direction $z_{ij}$ while avoiding obstacles. 
Assume the controller is in the form of \eqref{u}, and the output function is \eqref{vec-landmarks}, following the approach of \cite{Mahroo,Mahroo2}, and using the CLF-CBF constraints reviewed in Sec.~\ref{sec:background}, we encode our goal in the following feasibility problem:

\begin{equation}\label{findK}
  \begin{aligned}
    & \textrm{find} \;\;{K_{ij}}\\
    & \textrm{subject to:}\\
    &\text{CBF:}\;-( \cL_Bh_{ij}(x)u+c\transpose_hh_{ij}(x))\leq 0,\\
    &\text{CLF:}\;\;\cL_BV_{ij}(x)u+c\transpose_v{V_{ij}}(x)\leq 0,\\
    &u\in\cU,\;\;\forall x\in \cX_{ij},\;\;(i,j)\in\cE.
  \end{aligned}
\end{equation}

The constraints in \eqref{findK} need to be satisfied for all $x$ in the region $\cX_{ij}$, i.e., the same control gains should satisfy the CLF-CBF constraints at every point in the region. In \cite{Mahroo2}, this problem is solved by rewriting \eqref{findK} using a min-max formulation and Linear Programming (LP) method.
Starting from a point $x\in\cX_{ij}$, $u_{ij}$ drives the robot toward $x_j$, the robot switches its controller to  $u_{jq}$ when $x$ reaches the exit face of $\cX_{ij}$.

%%%%%%%%%%%%%%%%%%%%%%%%%%%%%%%%%%%%%%%%%%%%%%%%%%%%%%%%%%%%%%%%%%%%%%%%%%%%%%%%
\section{FEEDBACK CONTROL PLANNING WITH BEARING MEASUREMENTS}\label{sec:problem-setup}

%At a high level, our algorithm first divides the configuration space into cells according to a tree-graph representation of the environment, and then computes a controller for every convex cell $\cX_{ij}, (i,j)\in \cE_s$ that can be used to move the robot toward the desired direction starting from any initial location in the cell. We guarantee the stability and safety constraints through CLF and CBF constraints respectively. First, we review the problem of finding such a controller in \cite{Mahroo2}. Then, we assume the robot is equipped with a monocular camera that does not measure the depth of the object and reformulate the problem of finding such a control gain.

As mentioned in the introduction, in this paper we consider a robot equipped with a monocular camera that, in general, does not provide the depth of a target object in the image, and instead measures the corresponding relative bearing (Sec.~\ref{sec:bearing}).
In this section, we show that in this case, it is possible to still use the control synthesis method of Sec.~\ref{sec:tree-controller} after rescaling the bearing measurements, such that they are similar to the ideal displacement measurements. We divide the section into three parts. First, we give details on the rescaling procedure; then, we show that the resulting bearing controller still solves the path planning problem, albeit with modified CLF and CBF conditions; finally, we discuss how to handle the practical problem of a camera with a limited field of view.

\subsection{Rescaling of the Bearing Measurements}\label{sec:rescaling}
For simplicity, in this section we consider only 2-D environments (however, an extension to the 3-D case is possible with minor modifications). Without loss of generality, we assume that the robot position $x$ is equal to the zero vector (if not, all the computations below are still valid after introducing opportune shifts by $x$).

The bearings $\{\beta_i\}$ can be identified with points on a unitary circle centered at the origin (Fig.~\ref{bearing}). 

For the current cell (and hence its associated controller) we pick a \emph{fixed} landmark $f$ among those available. We then define $\tilde{l}_f$ as the point on the circle corresponding to $\beta_f$. Our goal is to rescale all the other bearings $i\neq f$ such that they are similar to the full displacements.

For this purpose, we define \emph{inter-landmark} bearing directions between landmark $f$ and every other landmarks $i\neq f$ as 
\begin{equation}
    \beta_i^f= d_i(l_f)^{-1}(l_i-l_f);
\end{equation}
note that these bearings can be pre-computed from the known locations of the landmarks in the map.
%We denote $\alpha_j$ and $\alpha^j_i$ as the angles between the horizontal axis of the coordinate system and vectors $\beta_j$ and $\beta^j_i$ respectively as
%\begin{equation}\label{alpha}
%    \alpha_j = \arctan(\frac{e_2\cdot \beta_j}{e_1\cdot\beta_j}),\;\;\;\;\alpha_i^j = \arctan(\frac{e_2\cdot \beta_i^j}{e_1\cdot\beta_i^j})
%\end{equation}
%where $e_i$ is a vector where all of its elements are zero except the $i^{th}$ element which is $1$ and "$\cdot$" shows the dot product of two vectors.

For each landmark $i\neq f$, let $P_i^f$ be the line passing through the fixed landmark $\tilde{l}_f$ with direction $\beta_i^f$, and let $P_i$ be the line passing through the robot position $x$ with direction $\beta_i$. We then define the scaled landmark position $\tilde{l}_i$ as the intersection between $P_i^f$ and $P_i$.

\begin{lemma}
Let $s(x)=d_f(x)$ be the distance between landmark $f$ and the robot; then, we have that $l_i-x=s(x) (\tilde{l}_i-x)$ for each landmark $i$.
\end{lemma}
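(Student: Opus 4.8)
The plan is to exhibit the single homothety (uniform scaling about the robot) that realizes the claimed identity, and to check that it carries each scaled landmark $\tilde l_i$ onto the true landmark $l_i$. Define $H\colon \real{2}\to\real{2}$ by $H(y)=x+s(x)(y-x)$, i.e.\ the homothety centered at the robot position $x$ with ratio $s(x)=d_f(x)$. The asserted equality $l_i-x=s(x)(\tilde l_i-x)$ is exactly $l_i=H(\tilde l_i)$, so it suffices to prove $H(\tilde l_i)=l_i$ for every $i$.

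First I would dispatch the case $i=f$. By construction $\tilde l_f$ is the point on the unit circle centered at $x$ in the direction $\beta_f$, so $\tilde l_f-x=\beta_f=d_f(x)^{-1}(l_f-x)$, whence $H(\tilde l_f)-x=s(x)(\tilde l_f-x)=d_f(x)\beta_f=l_f-x$. This settles $i=f$ and also records the fact $H(\tilde l_f)=l_f$, which drives the rest.

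Now fix $i\neq f$ and use two elementary properties of $H$: (i) it sends any line to a parallel line, and (ii) it fixes, as a set, every line through its center $x$. Applying (i) to $P_i^f$ (the line through $\tilde l_f$ with direction $\beta_i^f$), the image $H(P_i^f)$ is the line through $H(\tilde l_f)=l_f$ with direction $\beta_i^f$; since $\beta_i^f=d_i(l_f)^{-1}(l_i-l_f)$ is a positive multiple of $l_i-l_f$, this image is precisely the line through $l_f$ and $l_i$. Applying (ii) to $P_i$ (the line through $x$ with direction $\beta_i$) gives $H(P_i)=P_i$, and $l_i$ lies on $P_i$ because $l_i-x=d_i(x)\beta_i$. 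As $\tilde l_i=P_i^f\cap P_i$ and $H$ is a bijection, $H(\tilde l_i)=H(P_i^f)\cap H(P_i)=\overline{l_f l_i}\cap P_i$. The point $l_i$ belongs to both of these lines, so provided they are distinct the intersection is the single point $l_i$, giving $H(\tilde l_i)=l_i$ and hence the lemma.

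The only obstacle is the degenerate configuration in which $x$, $l_f$, $l_i$ are collinear: then $\overline{l_f l_i}=P_i$ (indeed $P_i^f=P_i$ already before applying $H$), so $\tilde l_i$ is not pinned down by the intersection and the statement must be read in a limiting sense, or excluded by a genericity assumption on the placement of the landmarks. Away from this measure-zero set the argument above is complete; on it, the identity is recovered by continuity of both sides in $x$. Everything else — verifying properties (i) and (ii) of $H$, and the membership facts $l_i\in P_i$ and $l_i\in\overline{l_f l_i}$ — is routine.
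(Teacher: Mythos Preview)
Your argument is correct and is essentially the paper's similar-triangles proof recast in the language of homotheties: the paper observes that the triangles $l_f,x,l_i$ and $\tilde l_f,x,\tilde l_i$ have equal internal angles and similarity ratio $s(x)$, which is exactly the statement that your map $H$ sends the second triangle to the first. Your treatment is slightly more careful (you make the direction-preservation explicit and flag the collinear degenerate case, which the paper omits), but the geometric content is the same.
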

\begin{proof}
The triangles $l_f,x,l_i$ and $\tilde{l}_f,x,\tilde{l}_i$ are similar since they have identical internal angles. Moreover, $\norm{\tilde{l}_f-x}=1$ by construction, the ratio between the segments $l_f,x$ and $\tilde{l}_f,x$ is equal to $s(x)$. Combining these two facts, we have that the ratio between the segments $l_f,x$ and $\tilde{l}_f,x$ is also $s(x)$; the claim then follows.
\end{proof}

Our proposed solution is then to compute the scale displacements $\tilde{\cY}=\stack(\{\tilde{l}_i-x)$, which are then used with the pre-computed controller 
\begin{equation}\label{eq:u tilde}
\tilde{u}_{ij}(x)=K_{ij}\tilde{\cY}.
\end{equation}
\subsection{Analysis of the Bearing Controller}%
\label{sec:bearing_measurement}

The following lemma shows that the original displacement-based controller $u_{ij}$ and our proposed bearing-based controller $\tilde{u}_{ij}$ are essentially equivalent from the point of view of path planning.

\begin{proposition}\label{prop:speed reparametrization}
Assume $s(x)$ is uniformly upper bounded (i.e., $s(x)<\infty$ for all $x\in \cX_{ij}$). The controllers $u_{ij}$ in \eqref{u} and $\tilde{u}_{ij}$ in \eqref{eq:u tilde} produce the same paths (but traced, in general, with different speeds) for the drifless system \eqref{sys1} when started from the same initial condition.
\end{proposition}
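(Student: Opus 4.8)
The plan is to reduce the claim to the elementary geometric fact that two vector fields related by a strictly positive scalar factor have the same integral curves, differing only by a time reparametrization.

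First I would use the preceding lemma to relate the two measurement vectors directly. Stacking the identity $l_i-x=s(x)(\tilde l_i-x)$ over all landmarks $i=1,\dots,n_l$ yields $\cY=s(x)\,\tilde{\cY}$, and hence $u_{ij}(x)=K_{ij}\cY=s(x)\,K_{ij}\tilde{\cY}=s(x)\,\tilde u_{ij}(x)$. Plugging into the dynamics, the two closed-loop vector fields $f(x)\defeq BK_{ij}\cY$ and $\tilde f(x)\defeq BK_{ij}\tilde{\cY}$ satisfy $f(x)=s(x)\,\tilde f(x)$ on $\cX_{ij}$, where $s(x)=d_f(x)=\norm{l_f-x}$. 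The scalar $s$ is continuous, strictly positive on the relevant domain (the robot never coincides with the fixed landmark $f$, which is in any case required for $\beta_f$ to be defined), and uniformly bounded above by hypothesis; moreover $\cY$ and $\tilde{\cY}$ are respectively affine and smooth in $x$, so both $f$ and $\tilde f$ are locally Lipschitz and the two closed-loop systems have unique solutions from each initial condition in $\cX_{ij}$.

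Next I would carry out the reparametrization explicitly. Let $x(\cdot)$ solve $\dot x=f(x)$ with $x(0)=x_0\in\cX_{ij}$ on its maximal interval $[0,T)$ in the cell. Define a new time variable $\tau$ by $\tfrac{d\tau}{dt}=s(x(t))$, $\tau(0)=0$; since $s(x(t))>0$ this is a strictly increasing $C^1$ bijection of $t$, with $C^1$ inverse $t(\tau)$ (the uniform upper bound on $s$ keeps the change of variables nondegenerate and maps a full trajectory to a full, possibly time-dilated, one). Setting $\tilde x(\tau)\defeq x(t(\tau))$ and applying the chain rule gives $\tfrac{d\tilde x}{d\tau}=f(\tilde x(\tau))\,\tfrac{dt}{d\tau}=f(\tilde x(\tau))/s(\tilde x(\tau))=\tilde f(\tilde x(\tau))$ with $\tilde x(0)=x_0$, so by uniqueness $\tilde x$ is precisely the bearing-controlled trajectory from $x_0$. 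Because $t\mapsto\tau$ is a bijection of intervals, the images $\{\tilde x(\tau)\}$ and $\{x(t)\}$ coincide as subsets of $\cX_{ij}$; the reverse inclusion is the same argument run with $\tfrac{d\tau}{dt}=1/s$. Finally, the two parametrizations have equal speed exactly where $s(x)=1$, i.e. where the robot is at unit distance from landmark $f$, which explains the ``different speeds in general'' qualifier.

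I expect the conceptual content to be fully captured by the identity $f=s\,\tilde f$ with $s$ positive; the only care needed is bookkeeping, namely checking that $s$ never vanishes along the trajectory (equivalently, landmark $f$ is never reached, which is anyway necessary for the bearing measurement itself to make sense) and that the reparametrized time sweeps out the entire path — which is exactly what the uniform-upper-bound hypothesis on $s(x)$ secures. Well-posedness of the ODEs, the chain-rule step, and invertibility of the time change are then routine.
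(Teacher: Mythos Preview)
Your proposal is correct and follows the same core idea as the paper: the relation $\cY=s(x)\tilde{\cY}$ gives $f(x)=s(x)\tilde f(x)$ for the two closed-loop vector fields, and a positive scalar rescaling of a vector field leaves integral curves invariant up to time reparametrization. The paper's own proof is a two-line sketch that simply asserts $\dot x=s\,\dot{\tilde x}$ at the same location and concludes; you have supplied the details it omits (explicit construction of the time change $\tfrac{d\tau}{dt}=s(x(t))$, regularity and uniqueness, and the role of the upper bound on $s$), so your argument is strictly more complete but not a different route.
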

\begin{proof}
Let $x$ and $\tilde{x}$ be the trajectories of the system under $u_{ij}$ and $\tilde{u}_{ij}$, respectively. Since both the dynamics and the controllers are linear, we have that $\dot{x}=s\dot{\tilde{x}}$ when evaluated at the same location. This implies that the two curves $x$ and $\tilde{x}$ are the same up to a reparametrization of the velocity.
\end{proof}

In fact, we can also relate the new controller to the conditions in the synthesis problem~\eqref{findK}.
\begin{proposition}\label{prop:new CBF CLF conditions}
Assume that $s_{\min}<s(x)\leq s_{\max}$, and that $\tilde{u}=K_{ij}s\cY\in \cU$ for all $x\in \cX_{ij}$. Then $K_{ij}$ is a feasible solution for \eqref{findK} with the modified CLF and CBF conditions:
\begin{align}
-( \cL_Bh_{ij}\tilde{u}+\tilde{c}_hh_{ij})&\leq 0, \\ \cL_BV_{ij}\tilde{u}+\tilde{c}_vV_{ij}&\leq 0,  \end{align}
where $\tilde{c}_h=s_{\min}\inverse c_h$ and $\tilde{c}_v=s_{\max}\inverse c_v$.
\end{proposition}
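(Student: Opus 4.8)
The plan is to start from the key scaling identity established in the preceding Lemma, namely $l_i - x = s(x)(\tilde l_i - x)$ for every landmark $i$. Stacking this over all landmarks gives $\cY = s(x)\,\tilde\cY$, equivalently $\tilde\cY = s(x)\inverse \cY$. Substituting into the definition \eqref{eq:u tilde} yields $\tilde u_{ij}(x) = K_{ij}\tilde\cY = s(x)\inverse K_{ij}\cY = s(x)\inverse u_{ij}(x)$. Conversely, the hypothesis of the proposition writes the relevant quantity as $\tilde u = K_{ij}s\cY$, so I should reconcile notation: the controller actually applied is $\tilde u_{ij} = s\inverse u_{ij}$, and the object appearing in the modified constraints is this $\tilde u_{ij}$. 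I would make this relationship explicit at the start so the rest is bookkeeping.

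Next I would use the fact that $K_{ij}$ is, by assumption, a feasible solution of the original problem \eqref{findK}: for all $x \in \cX_{ij}$ we have $\cL_B V_{ij}(x) u_{ij} + c_v V_{ij}(x) \le 0$ and $-(\cL_B h_{ij}(x) u_{ij} + c_h h_{ij}(x)) \le 0$, plus $u_{ij} \in \cU$. The Lie derivatives $\cL_B V_{ij}$ and $\cL_B h_{ij}$ are linear in the control input (see \eqref{Lie}), so $\cL_B V_{ij}(x)\tilde u_{ij} = s(x)\inverse \cL_B V_{ij}(x) u_{ij}$ and likewise for $h_{ij}$. For the CLF inequality I then write
\begin{align*}
\cL_B V_{ij}\tilde u_{ij} + \tilde c_v V_{ij}
&= s\inverse\cL_B V_{ij} u_{ij} + s_{\max}\inverse c_v V_{ij} \\
&\le s\inverse\bigl(\cL_B V_{ij} u_{ij} + c_v V_{ij}\bigr),
\end{align*}
where the last step uses $s \le s_{\max}$ together with $V_{ij} \ge 0$ (so $s_{\max}\inverse c_v V_{ij} \le s\inverse c_v V_{ij}$), and the right-hand side is $\le 0$ by feasibility of $K_{ij}$ for \eqref{findK}; multiplying through by $s>0$ keeps the inequality. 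For the CBF inequality the direction of the sign constraint flips which bound I need: on $\cC_{ij}$ we have $h_{ij}(x) \ge 0$, and from $s \ge s_{\min}$ I get $s_{\min}\inverse c_h h_{ij} \ge s\inverse c_h h_{ij}$, hence
\begin{align*}
-\bigl(\cL_B h_{ij}\tilde u_{ij} + \tilde c_h h_{ij}\bigr)
&= -\bigl(s\inverse\cL_B h_{ij} u_{ij} + s_{\min}\inverse c_h h_{ij}\bigr) \\
&\le -s\inverse\bigl(\cL_B h_{ij} u_{ij} + c_h h_{ij}\bigr) \le 0,
\end{align*}
again using feasibility of $K_{ij}$. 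Finally, the input-constraint requirement $\tilde u_{ij} \in \cU$ is exactly the standing assumption of the proposition, so all three conditions of the modified problem hold.

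The only genuinely delicate point — and the one I would state carefully rather than wave at — is getting the two scale bounds matched to the correct constraint: because $V_{ij}$ enters the CLF constraint with a $+$ sign and must be dominated, we need an \emph{upper} bound $s_{\max}$ on $s$ to make $\tilde c_v = s_{\max}\inverse c_v$ small enough; because $h_{ij}$ enters the CBF constraint inside a negation, we instead need a \emph{lower} bound $s_{\min}$ to make $\tilde c_h = s_{\min}\inverse c_h$ small enough. Both rescalings shrink the relevant class-$\mathcal{K}$ slope, which is exactly what is needed since the Lie-derivative term has been scaled down by the same $s\inverse$ only pointwise and we cannot control $s(x)$ exactly — so we bound it by the worst case in the direction that preserves the inequality. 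Everything else is linearity of $\cL_B(\cdot)$ in $u$ and sign-positivity of $V_{ij}$ and $h_{ij}$ on the relevant sets.
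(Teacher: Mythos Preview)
Your argument is correct and follows exactly the paper's one-line proof: divide the original CLF and CBF inequalities by $s(x)>0$ and then replace $s(x)\inverse$ by the appropriate extreme value $s_{\max}\inverse$ or $s_{\min}\inverse$. You simply spell out the sign bookkeeping (which bound goes with which constraint, using $V_{ij}\ge 0$ and $h_{ij}\ge 0$) that the paper leaves implicit.
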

\begin{proof} The claim follows by dividing the original CBF and CLF conditions by $s(x)$, and then using the bounds $s_{\min}$, $s_{\max}$.
\end{proof}

Note that the bounds on $s(x)$ translate to bounds on the distance between the cell $\cX_{ij}$ and the landmarks $l_i$

Taken together, Propositions~\ref{prop:speed reparametrization} and~\ref{prop:new CBF CLF conditions} show that the controller $K_{ij}$ found by assuming a displacement-based controller can be used also for the bearing-based case, although the speed of the resulting trajectories might be more aggressive.

%%%%%%%%%%%%%%%%%%%%%%%%%%%%%%%%%%%%%%%%%%%%%%%%%%%%%%%%%%%%%%%%%%%%%%%%%%%%%%%%
\subsection{Control With the Limited Field of View}\label{sec:limited-field-of-view}
In the formulation Sec.~\ref{findK} it is implicitly assumed that the controller has access to all the landmarks measurements at all times. However, in practice, a robot will only be able to detect a subset of the landmarks due to a limited field of view or environment occlusions (see Fig.~\ref{FOV}). To tackle this issue, we show in this section that the controller $\tilde{u}$ in \eqref{eq:u tilde} can be designed using all available landmarks (as in the preceding section), but then implemented using two landmarks or more.

The main idea is to use the two available landmarks to estimate the missing landmarks $\tilde{l}_i$ through the following steps (see also Fig.~\ref{LFOV}):
\begin{enumerate}
    \item Choose one landmark $f$ as the fixed landmark, and denote as $f'$ the other landmark (in the figure $f=2$ and $f'=1$). Compute $\beta_{i}^f$ and $\beta_i^{f'}$ for all $i\notin \{f,f'\}$.
    \item Following the same steps as in Sec.~\ref{sec:rescaling}, compute $\tilde{l}_f$ (from the bearing $\beta_f$) and $\tilde{l}_f'$ (via intersection).
    \item Let $P_i^f$ and $P_i^{f'}$ be the lines passing through $l_f$ and $l_{f'}$ in the directions $\beta_i^f$ and $\beta_i^{f'}$, respectively.
    \item Compute the non-visible modified landmark $\tilde{l}_i$ from the intersection of $P_i^f$ and $P_i^{f'}$. %(In Fig.~\ref{LFOV}, $\check{l}_3$ is computed as the intersection of two red and orange lines.)
\end{enumerate}
After finding the modified bearings for all landmarks, the controller is implemented as in~Sec.~\ref{sec:rescaling}.
\begin{figure}
    \centering
    \subfloat[]{\label{LFOV}{\includegraphics[width=5.5cm]{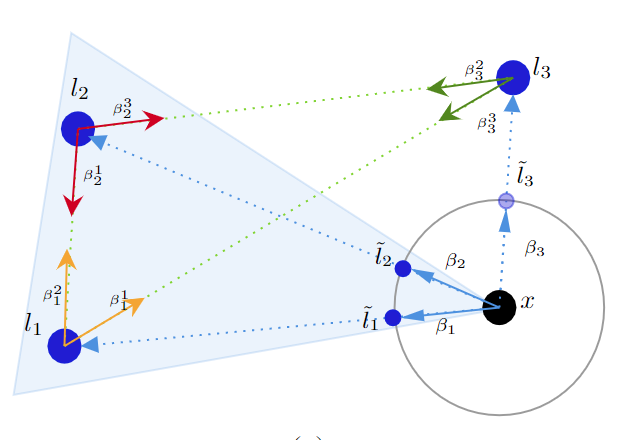}}}
    \hfill
    \subfloat[]{\label{FOV}{\includegraphics[width=2.8cm]{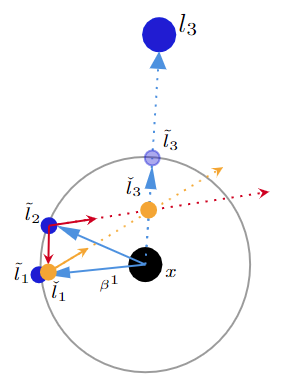}}}
    \caption{Implement controller with limited field of view. In Fig.~\ref{LFOV}, landmark $l_3$ is not visible from the robot camera. In Fig.~\ref{FOV}, the modified observation $l_3$ is computed using the relative bearing measurements between landmarks.}
    \label{fig:my_label}
\end{figure}

%%%%%%%%%%%%%%%%%%%%%%%%%%%%%%%%%%%%%%%%%%%%%%%%%%%%%%%%%%%%%%%%%%%%%%%%%%%%%%%%%%%%%%
\section{SIMULATION AND EXPERIMENTAL RESULTS}
To assess the effectiveness of the proposed algorithm, we run a set of validations using both MATLAB simulations and experiments using ROS on a Jackal robot by Clearpath Robotics~\cite{Clearpath}.
While the optimization problem guarantees exponential convergence of the robot to the stabilization point, in these experiments the velocity control input $u$ has been normalized to achieve constant velocities along the edges of the tree.
%%%%%%%%%%%%%%%%%%%%%%%%%%%%%%%%%%%%%%%%%%%%%%%%%%%%%%%%%%%%%%%%%%%%%%%%%%%%%%%%%

%%%%%%%%%%%%%%%%%%%%%%%%%%%%%%%%%%%%%%%%%%%%%%%%%%%%%%%%%%%%%%%%%
\begin{figure}[t]  % spans both columns
  \newcommand{\inpic}[1]{\includegraphics[width=4cm]{#1}}
\centering
  \subfloat[Configuration space]{
    \inpic{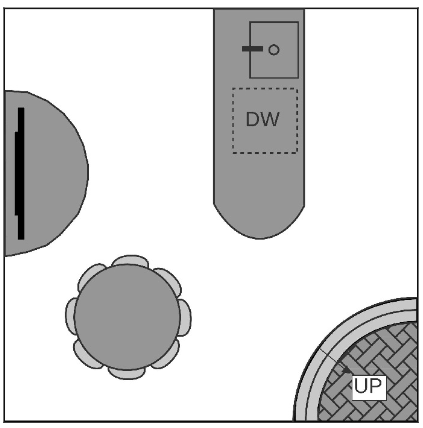}
    \label{fig:Cspace}
  }
  \subfloat[Generated RRT*]{
    \inpic{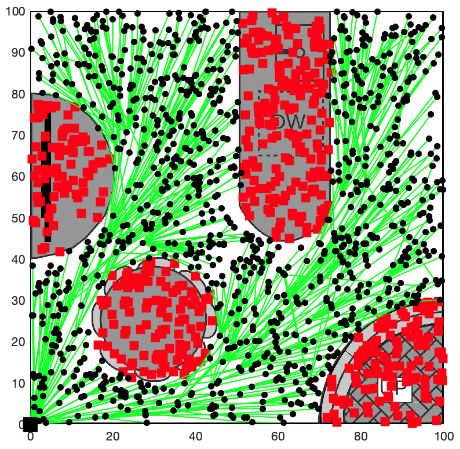}
    \label{fig:RRT}
  }%

  \subfloat[Simplified tree]{
    \inpic{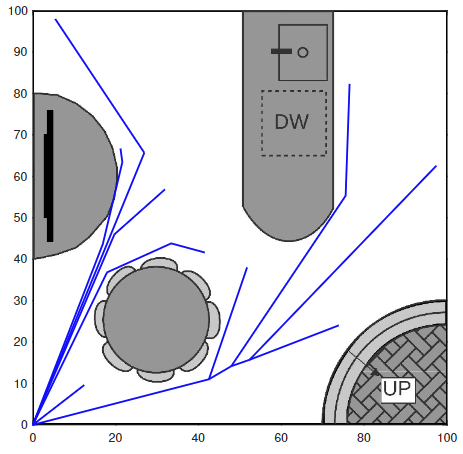}
    \label{fig:RRT_simple}
  }
  \subfloat[Simulation]{
    \inpic{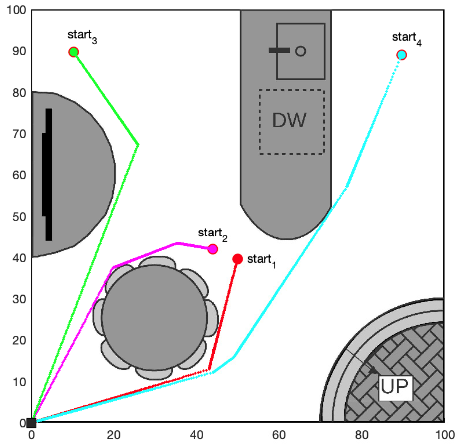}
    \label{fig:Path}
  }%
\caption{Fig.~\ref{fig:Cspace} shows the configuration space of a kitchen-like environment, where obstacles are represented by gray color. In Fig.~\ref{fig:RRT} the root for \rrtstar{} is located at the origin, red dots show the samples in collision with the obstacles, and the tree generated by \rrtstar{} is plotted in green. Fig.~\ref{fig:RRT_simple} depicts the corresponding simplified tree. In Fig.~\ref{fig:Path}, the agent is initialized at different start points and moves toward the goal point.}
\label{fig:exp}
\end{figure}
%%%%%%%%%%%%%%%%%%%%%%%%%%%%%%%%%%%%%%%%%%%%%%%%%%%%%%%%%%%%%%%%%%%%%%%%%%%%%%%%
\subsection{MATLAB Simulation}
The simulated MATLAB environment is presented in Fig.~\ref{fig:exp}, where the obstacles are represented by gray color. For \rrtstar{}, we set the maximum number of iterations in \rrtstar{} to $1500$, we choose $\eta=50$, and we place the root of the tree at the origin of the environment (bottom left corner). The generated tree from \rrtstar{} and its simplified form are shown in Fig.~\ref{fig:RRT} and Fig.~\ref{fig:RRT_simple} respectively. Then, we compute a controller for each edge of the simplified tree as described in Sec.~\ref{sec:tree-controller}. Fig.~\ref{fig:Path} shows the resulting trajectories from four initial positions.
In all cases, the robot reaches the desired goal location by applying the sequence of controllers found during planning.
%%%%%%%%%%%%%%%%%%%%%%%%%%%%%%%%%%%%%%%%%%%%%%%%%%%%%%%%%%%%%%%%%%%%%%%%%%%%%%%%
\begin{figure}[t]
  \centering
  \subfloat[Clearpath Robotics Jackal]{\label{fig:robot}{\includegraphics[width=3.5cm]{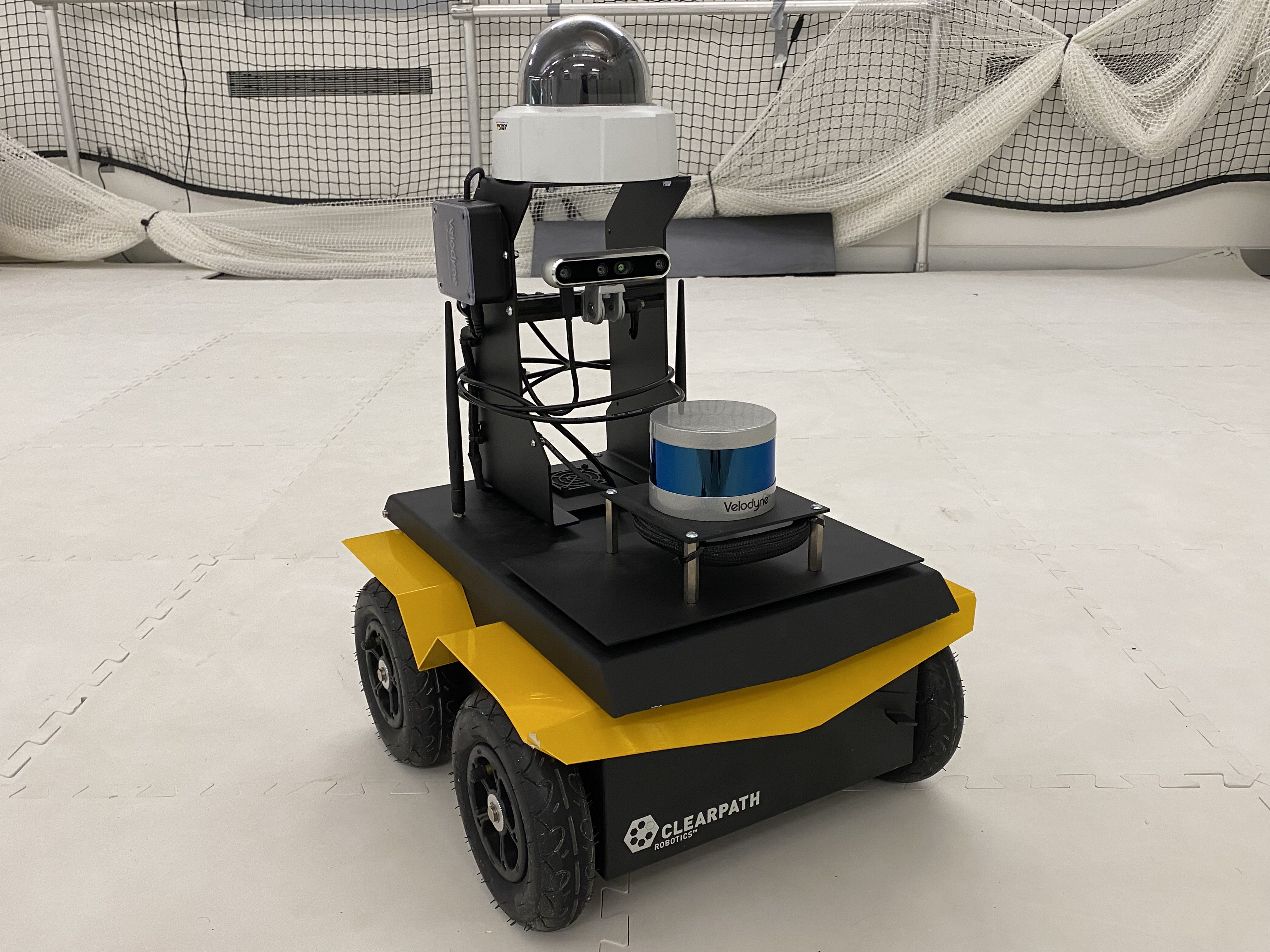} }}
%   \hfill
  \subfloat[AprilTag]{{\label{fig:apriltag-figure}\includegraphics[width=2cm]{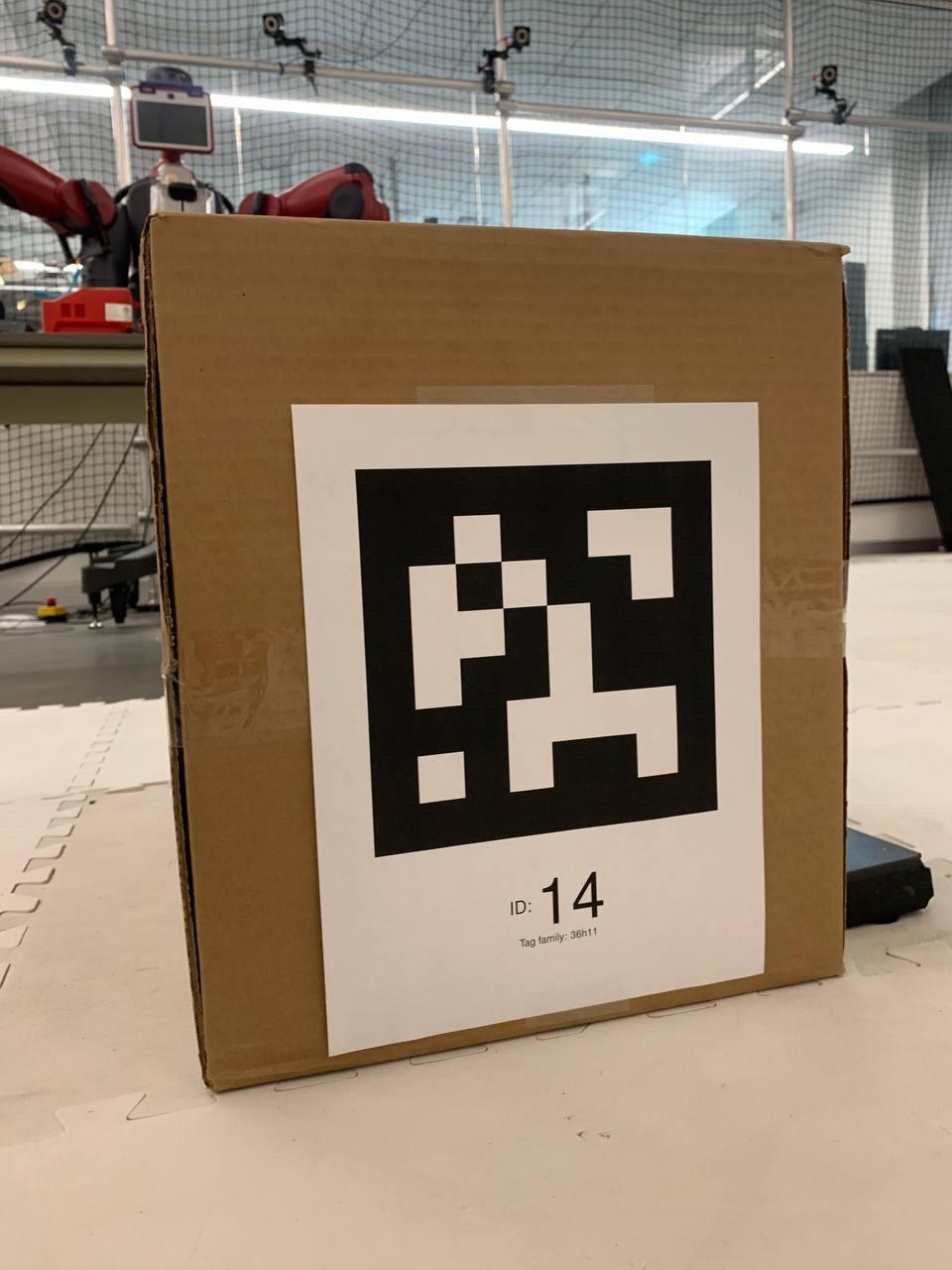} }}%
  \caption{The Jackal robot used for the experiments is shown in Fig.~\ref{fig:robot}. We use AprilTags (Fig.~\ref{fig:apriltag-figure}) as the landmarks for the algorithm.}
  \label{fig:}
\end{figure}

\begin{figure}
    \centering
    {\includegraphics[width=4cm]{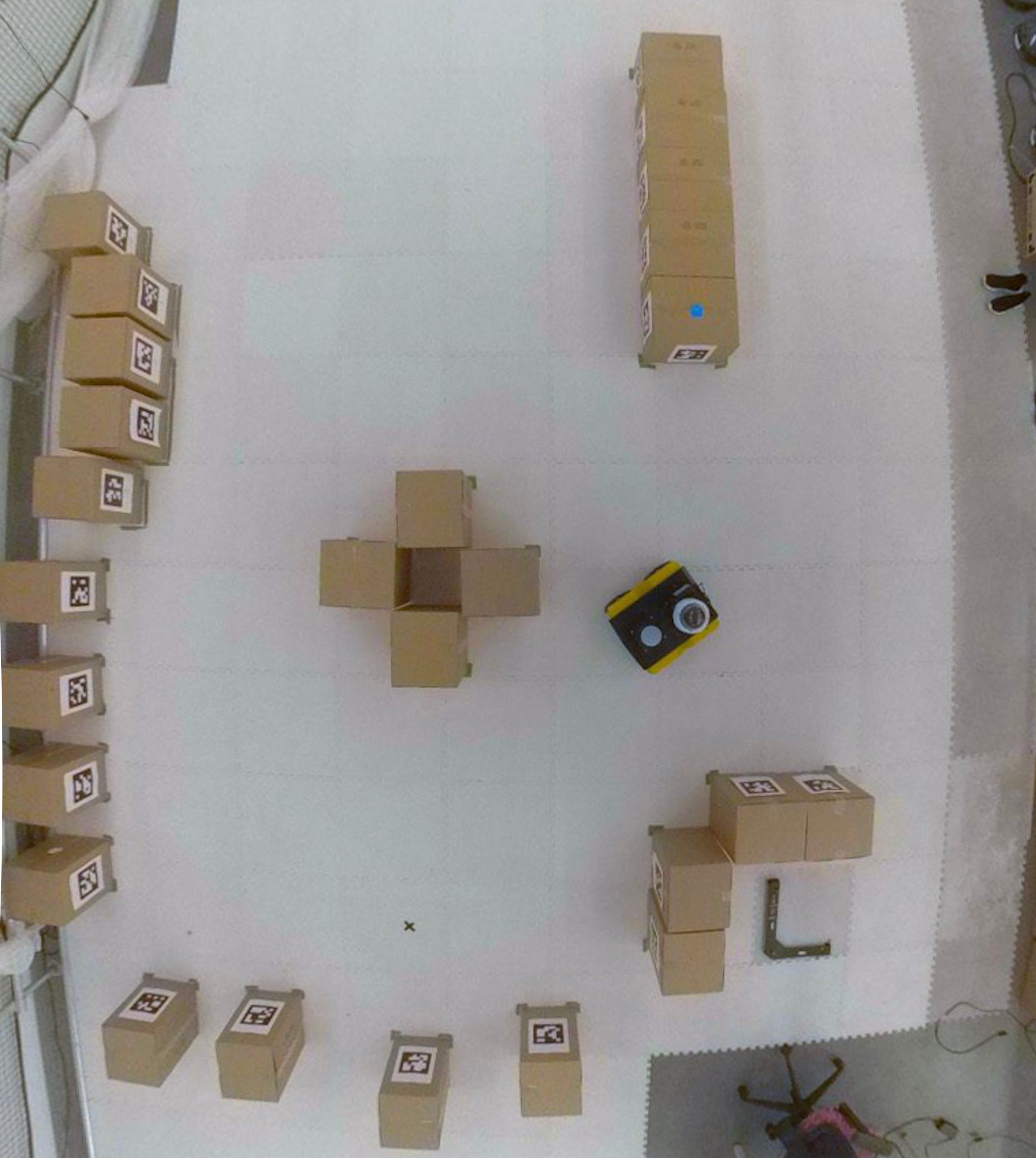} }
    \caption{Experimental Environment}
    \label{fig:original-env}
\end{figure}
%%%%%%%%%%%%%%%%%%%%%%%%%%%%%%%%%%%%%%%%%%%%%%%%%%%%%%%%%
\subsection{Clearpath Robotics Jackal Experiment}

We used the Jackal robot \cite{Clearpath} to test our algorithm in a lab environment. A bird's-eye view of the environment is presented in Fig.~\ref{fig:original-env}. We equipped the Jackal with an Intel RealSense D455 camera \cite{intel}. The Jackal's trajectory is tracked by the OptiTrack motion capture system (with 44 infra-red cameras), and we use AprilTags \cite{Wang2016} with unique codes as the landmark fiducials, which are placed in the environment at known positions and orientations with respect to the inertial reference frame of OptiTrack. The data was managed by the Robot Operating System (ROS) \cite{ROS}.

The experimental implementation of our algorithm requires taking into account several practical considerations. First, due to the limited field of view inherent in the RealSense camera, which prevents the Jackal from detecting all required fiducials, we use the proposed method in Sec.~\ref{sec:limited-field-of-view} to compute the controller based on two fiducials detected by the camera at each time instant.
Second, all bearing measurements described in previous sections are assumed to be rotationally aligned with the global inertial reference frame. 
% \mahroo{I do not know about this equations and definitions of them} 
To accommodate for this assumption, we compute the global rotation of the Jackal with respect to the world reference frame as
\begin{equation}
    ^W\mathbf{R}_{J} = {}^W\mathbf{R}_{AT} \left({}^{J}\mathbf{R}_{AT}\right)^{-1},
\end{equation}
where ${}^{J}\mathbf{R}_{AT}$ is the measured rotation of the AprilTag with respect to the Jackal.
Then, a normalized bearing ${}^J\mathbf{t}_{AT-J}$ detected by the Jackal can be aligned with the global reference frame as ${}^W\mathbf{t}_{AT-J} = {}^W\mathbf{R}_{J} {}^J\mathbf{t}_{AT-J}$.

% \mmitjans{OLD}
% To satisfy this assumption, the measured displacement of an AprilTag with respect to the Jackal in world coordinates $\left({}^{W}t_{AT-C}\right)$ is computed as:
% %
% \begin{equation}
%   {}^{W}\mathbf{t}_{AT-C} = {}^{W}\mathbf{R}_{AT} \cdot \left( {}^{C}\mathbf{R}_{AT} \right)^{\top} \cdot {}^{C}\mathbf{t}_{AT-C},
% \end{equation}
% %
% where ${}^{C}\mathbf{t}_{AT-C}$ is the measured displacement in Jackal coordinates, ${}^{C}\mathbf{R}_{AT}$ is the measured orientation of the AprilTag with respect to the Jackal, and ${}^{W}\mathbf{R}_{AT}$ is the \emph{a priori} known orientation of the AprilTag with respect to the world reference frame. Finally, $y_i\defeq{}^{W}\mathbf{t}_{AT-C}$ is used to compute the next control input $u$.

Finally, previous sections assumed a linear dynamical model for the robot, while the Jackal has a unicycle dynamics. We map the original 2D input $u$ to a linear velocity $u_x$ along the $x$ axis of the robot and an angular velocity $\omega_z$ around the $z$ axis using a standard low-level controller:

\begin{align}
u_x = \small{\dfrac{\alpha}{\left\| u \right\|}
\begin{bmatrix}
    \cos{\varphi} \\ \sin{\varphi}
  \end{bmatrix}}\transpose u,&&
\omega_z = \small{\frac{\beta}{\left\| u \right\|} \bmat{0\\0\\1}\transpose \left(
                       \begin{bmatrix}
                         \cos{\varphi} \\ \sin{\varphi} \\ 0
                       \end{bmatrix}
  \times \begin{bmatrix}
    u \\ 0
  \end{bmatrix} \right)}
\end{align}
where $\varphi$ is the instantaneous yaw rotation of the robot with respect to the world reference frame, which is extracted from ${}^{C}\mathbf{R}_{AT}$ and ${}^{W}\mathbf{R}_{AT}$, and $\times$ represent the 3-D cross product; $\alpha$ and $\beta$ are user-defined scalar gains, 0.1 and 0.5 respectively.
% \mahroo{check above}
%Fig.~\ref{fig:robot-trajectory} depicts the real robot trajectories.

We moved the Jackal across 4 different paths in the environment. 
The Jackal followed the edges of the simplified \rrtstar{} tree and reached the expected goal for all starting positions, despite the fact that the measurements were obtained with vision alone, and despite different dynamics of the robot. The video of the experiments is available at \url{https://github.com/Mahrooo/NavigationwithBearingMeasurements}.
% \mahroo{the video is available on...}

%%%%%%%%%%%%%%%%%%%%%%%%%%%%%%%%%%%%%%%%%%%%%%%%%%%%%%%%%%%%%%%%%%%%%%%%%%%%%%%%
\section{CONCLUSIONS AND FUTURE WORKS}\label{sec:conclusions}

In this work, we proposed a novel approach to synthesize a set of out-put feedback controllers on elements of a cell decomposition of the environment that is based on the tree generated by a simplified version of sampling-based \rrtstar{} method. We assumed the robot is equipped with a monocular camera and there exists a global compass. We introduced a new approach to modify the bearing measurements such that they form a uniform scaling of the set of landmarks and defined the output feedback controllers take as input the modified bearing measurements. Then, we computed the output-feedback controllers by implementing the robust Linear Programming that satisfied the stability of the system by the CLF constraint and satisfied the safety of the system by CBF constraint. We validated our approach on both the simulation and the experimental environment and represented the robustness of our approach to the mismatches in the dynamical model of the robot, and to the measurements acquired with a camera with a limited field of view. For the future,  we plan to prove to study the robustness of our algorithm to the deformation of the environment. In addition, we plan to find the output feedback controller when there exists an uncertainty in the bearing measurements.
%%%%%%%%%%%%%%%%%%%%%%%%%%%%%%%%%%%%%%%%%%%%%%%%%%%%%%%%%%%%%%%%%%%%%%%%%%%%%%%%

\bibliographystyle{IEEEtran}
\bibliography{references.bib,biblio/IEEEFull.bib,biblio/planning.bib,biblio/hardware.bib,biblio/control.bib}
\end{document}